\newtheorem{theorem}{Theorem}
\newtheorem{lemma}{Lemma}
\newtheorem{definition}{Definition}[section]
\title{Neighborhood Preserving Kernels for Attributed Graphs}
\author{%
  Asif Salim
   \\
  Department of  Mathematics,\\
  Indian Institute of Space Science and Technology,\\
  Thiruvananthapuram, India\\
  \texttt{asifsalim.16@res.iist.ac.in} \\
   \And
   Shiju S. S \\
  Architect - AI \& Automation, \\
   Litmus7 Systems Consulting Pvt Ltd, \\
    Cochin, India\\
   \texttt{shijusnair@gmail.com} \\
   \AND
   Sumitra. S \\
  Department of  Mathematics,\\
  Indian Institute of Space Science and Technology,\\
  Thiruvananthapuram, India\\
   \texttt{sumitra@iist.ac.in} \\
}
\begin{document}

\maketitle

\begin{abstract}
We describe the design of a
reproducing kernel suitable for attributed graphs, in which the similarity between the two graphs is defined based on the neighborhood information
of the graph nodes with the aid of a product graph formulation. We represent the proposed kernel as the weighted sum of two other kernels of which
one is an R-convolution kernel that processes the attribute information of the
graph and the other is an optimal assignment kernel that processes label information.
They are formulated in such a way that the edges processed as part of the kernel
computation have the same neighborhood properties and hence the kernel 
proposed  makes a well-defined
correspondence between regions processed in graphs. These concepts are also extended to the case of the shortest paths. We identified the state-of-the-art kernels that can be mapped to such a neighborhood
preserving framework. We found that the kernel value of the argument graphs
in each iteration of the Weisfeiler-Lehman color refinement algorithm can be obtained recursively from the product graph
formulated in our method. By incorporating the proposed kernel on support
vector machines we analyzed the real-world data sets and it has shown superior
performance in comparison with that of the other state-of-the-art graph kernels.
\end{abstract}

\section{Introduction}\label{sec:introduction}

Graph-structured data is used to represent information generated in  fields like bioinformatics, chemoinformatics, social network, natural language processing, computer vision, etc. Hence the development of  efficient methods for the analysis of such graphical data is very essential.  Kernel methods \cite{smola1998learning}, \cite{shawe2004kernel} is  an efficient technique for  analyzing the non-euclidean data. Applying kernel methods on such data requires a kernel function which is formulated using the characteristic properties of the data. In the case of graphs, many approaches have been developed for designing valid kernels. R-convolution kernel \cite{haussler1999convolution} which is a kernel design framework for structured data has been well utilized for designing graph kernels. In this approach, the graphs are decomposed into several parts and separate kernel functions are used to process these parts. Shortest path kernel \cite{borgwardt2005shortest}and  Graph-hopper kernel \cite{feragen2013scalable} etc. are examples of this. Along with this, another approach is to design a specific feature representation process for the graphs in the form of  a vector or to design an explicit mapping of the graphs to a reproducing kernel Hilbert space. For these approaches, graph structures such as random walks \cite{vishwanathan2010graph}, \cite{gartner2003graph}, \cite{kashima2003marginalized}, subtree patterns \cite{ramon2003expressivity}, \cite{shervashidze2011weisfeiler}, \cite{bai2015aligned}, etc. are utilized.

Another design paradigm associated with the graph kernel design is the valid optimal assignment kernel framework \cite{kriege2016valid}. While the R-convolution framework \cite{haussler1999convolution} processes the subgraph structures of the argument graphs one against each other, the assignment kernel framework assigns a bijection between the subgraphs, and kernel processing between them happens in one against one as per assigned bijection. The Weisfeiler-Lehman optimal assignment kernel \cite{kriege2016valid}, optimal assignment kernels on molecular graphs \cite{frohlich2005optimal}, etc. are examples.

Generally, the graph data has labels and attributes associated with each of its   nodes and  edges, where the label is usually a single character while the attribute is a vector.  For example, in the bioinformatics or chemoinformatics applications, the label on the nodes and edges  can be atomic symbol and type of bonds respectively  whereas the attribute information on edges and nodes can be related to that of physical and chemical properties. Thus  the main challenge in designing a  kernel for attributed graphs is  to define a similarity function that could capture all the characteristic properties of data by making use of the label as well as attribute information along with the overall graph structure. By taking into account all these aspects, we have designed a kernel, whose description is presented in this paper.





Most of the existing graph kernels process only the label information in the nodes and edges ignoring the attributes. The kernel we propose  uses the node and edge labels as well as their attribute information for finding its values.
For that, we formulated the reproducing kernel as  the weighted sum of  two kernels of which one is an R-convolution kernel   that processes  the attribute information   and the  other  is an optimal assignment kernel that processes the label information. 
By incorporating the proposed kernels on support vector machines, we experimentally verified its performance using real-world data sets selected from the chemoinformatics domain.


The paper is organized as follows. The related works are discussed in Section \ref{sec:related}. The notations used are explained in Section \ref{sec:notation}. The graph kernel frameworks used in our kernel design are discussed in Section \ref{sec:frameworks}. We define the concept of neighborhood preserving regions and the neighborhood preserving kernel based on edges in Section \ref{sec:npr}. The extension of these concepts to the shortest paths is discussed in Section \ref{sec:npspk}. Experiments and related discussions are in Section \ref{sec:exp} and the conclusions in Section \ref{sec:con}.

\section{Related works} \label{sec:related}

For processing the attributed graphs, the first attempts were based on random walk kernels. Borgwardt et.al \cite{borgwardt2005protein} improvised the random walk kernels proposed in \cite{gartner2003graph, kashima2003marginalized} to process the attribute information. They used the direct product graph proposed by Gartner et.al \cite{gartner2003graph} to identify similar walks and the kernel is formulated with the help of a modified version of the adjacency matrix of the direct product graph. This modified adjacency matrix encodes the corresponding node and edge kernels in each step of a walk in the direct product graph that corresponds to a similar walk in the argument graphs. The random walks of different lengths are taken into consideration in the higher powers of the adjacency matrix and the final kernel value is calculated using this property. But kernels based on random walks are slower in computation as the length of the walks increase and can suffer from tottering \cite{mahe2004extensions}. Tottering is a phenomenon that results in artificially high similarity values by the repeated visits of the nodes in the graphs.

Zhang et.al \cite{zhang2018retgk} used the return probabilities of random walks of fixed lengths to make node embeddings or a vector representation for the nodes. In this vector, the $i-th$ component of the vector corresponds to the probability of returning to the concerned node with an $i$-step random walk. This vector information is enriched with attribute information. The combined vector is then mapped to a reproducing kernel Hilbert space through kernel mean embedding. The graph kernel is then defined using these mappings.  One shortcoming with kernels based on random walks is that random walks processed may not be structurally equivalent, that is, as far as the regions corresponding to these random walks in the graphs are considered, the resulting subgraphs and hence the nodes correlated need not be structurally equivalent.

Kernel designs based on shortest paths were  used as instances of R-convolution kernels. Shortest path kernel \cite{borgwardt2005shortest} accounts for all shortest paths in the form of a triplet $(u,d(u,v),v)$ where $u,v$ are nodes, and $d(u,v)$ is the shortest path distance between the nodes. All such triplets were convolved against each other in the kernel definition. Note that in this case, only end nodes are considered for processing attribute information. Feragen et.al \cite{feragen2013scalable} extended these concepts to all nodes encountered in the shortest paths and designed GraphHopper kernel that process equal length shortest paths. Note that in these kernels well-defined correspondence between node pairs are absent, i.e, the source node of a shortest path can be paired with either the source or sink node of the counterpart shortest path and hence the same problem with the nodes in between as well since the kernel design does not use labels on nodes and edges. Hence the structural differences characterized by labels are getting neglected.

The subgraph matching kernel proposed by Kriege et.al \cite{kriege2012subgraph} enumerates through subgraphs that preserves subgraph isomorphism. They proposed a product graph definition from which the subgraph isomorphisms in the argument graphs can be identified by enumerating through the cliques in this product graph. The kernel component corresponding to a clique is then defined as the product of the corresponding node and edge kernels and the final graph kernel value is the summation across all such cliques up to a predefined size. Although this kernel maintains structural equivalences, it suffers from a high runtime issue.

The ordered decomposition directed acyclic graph (ODD) kernel \cite{oddkernel} creates a representation of the graph as a bag of well defined directed acyclic graphs (DAG). Each node pair in each DAG pairs are then processed in a similarity function  to derive the kernel value. Note that in this feature extraction procedure, it is difficult to establish a correspondence between the processed regions of the argument graphs.

Graph invariant kernels\cite{orsini2015graph} proposed by Orsini et.al is a framework that incorporates graph invariants with R-convolution design. This kernel is defined in terms of the summation of every possible node kernel between two graphs. Each node pair kernel value is multiplied by a weight as well. The weight is determined by the structural importance of that node pair in a particular R-convolution decompositions of the graphs. The weight function also incorporates a similarity value characterized by node invariants such as Weisfeiler-Lehman labels, spectral colors, etc. The maintenance of structural correspondences in this approach depends upon the selection of node invariants and it can have higher runtime problems depending upon the R-convolution relation design.

The approaches discussed above are based on certain graph substructures and each having a specific feature extraction process from them. Another fundamental approach applied to attributed graphs is based on the discretization of attributes. Propagation kernel \cite{neumann2012efficient} by Neumann et.al, Hash graph kernels by Morris et.al \cite{morris2016} etc. are examples. In Propagation kernels,  the attributes are discretized using a hashing mechanism. The information in the graphs is propagated at each iteration across the nodes. Each iteration gives a unique instance of the graph that carries structural information and the final kernel value is defined as the summation of individual kernels at these iterations. For efficient computation, it makes use of locality sensitive hashing. It is also helpful in handling missing attributes as well. In Hash graph kernels, the idea is to convert the attributes to discrete ones using well-defined hashing functions. The hashing mechanism is done in several iterations and kernels values at these iterations are summed up. Although both of the above approaches are scalable, it is coming at the cost of transformation of the original attributes.

\section{Notation}\label{sec:notation}

We define an undirected graph $G$ as $(V,E)$, where $V$ is the set of nodes and  $E = \{(u,v): \mbox{u  is incident on v} \}$ is the set of edges. Define a mapping $l: V \cup E \to \Sigma$, such that $l(v)$   is the label associated with a node $v$, $l((u,v))$ is the label associated with an edge $(u,v)$, and  $(\Sigma, \leq)$ is a total ordered set  that consists of node and edge labels.

The  shortest path between the node $u_1$ and $u_n$ is represented as $\Pi(u_1,u_n)$ and its length as $|\Pi(u_1,u_n)|$ where,   the length of the path   is defined as number of edges involved. The graph density is defined as the ratio of number of edges in the graph to the maximum possible edges where parallel edges are not assumed. $\mathcal{X}$  is assumed to be a separable metric space consisting of discrete structures like string, trees or graphs and  $(\Sigma_C, \leq)$ be a total ordered set containing Weisfeiler-Lehman (WL) refined labels.

\section{Methods}\label{sec:frameworks}
The discussion about the kernel frameworks we used in our design is given below.


\subsection{R-convolution kernels}
The R-convolution kernel formulated  by Haussler \cite{haussler1999convolution} is a generalized framework for processing structured data. This framework involves the decomposition of data into its constituent parts and a kernel is defined in terms of such decomposition.   The shortest path kernel \cite{borgwardt2005shortest}, subgraph matching kernel \cite{kriege2012subgraph}, etc. are examples of R-convolution kernels.

Let $x \in \mathcal{X}$. Assume that $x$ can be decomposed to $D$ number of components or parts. Let   $\tilde{x} =\{ x_1,\dots,x_D \} $ be this decomposition, where   $x_i \in X_i$ and  $X_i, 1 \leq i \leq D$, be a  non empty, separable metric space.  Define a relation $R$ as $$ R = \{(\tilde{x},x) \in X_1 \times  \dots \times X_D \times \mathcal{X} \vert \tilde{x} \text{ are parts of } x\} $$ That is, $R$ is true iff $\tilde{x}$ is a valid decomposition of $x$.
Now it is possible to define the inverse relation, $R^{-1}(x) = \{\tilde{x} : R(\tilde{x},x)=1\}$. Note that $R$ is finite if $R^{-1}$ is finite $\forall x \in \mathcal{X}$. Then  the R-convolution kernel $K_{R_{conv}}: \mathcal{X} \times \mathcal{X} \to \mathbb{R}$ is defined as $$ K_{R_{conv}}(x,y) = \sum_{\tilde{x} \in R^{-1}(x)}  \sum_{\tilde{y} \in R^{-1}(y)} \prod_{i=1}^{D} k_i(x_i, y_i)$$ where $(x_i, y_i)$ is  the $i^{th}$ component of  $(\tilde{x}, \tilde{y})$  and  $k_i(.,.):  X_i \times X_i \to \mathbb{R},  1 \leq i \leq D$ is the kernel corresponding to $i^{th}$ component.

\subsection{Valid optimal assignment kernel framework}
The optimal assignment kernel is defined as follows: Let $\mathcal{X}^n, n \in \mathbb{N} $ denote the set of all $n$-element
subsets of $\mathcal{X}$ and $B(X, Y )$, the set of all bijections between $X, Y$, where   $X, Y \in \mathcal{X}^n$. The optimal assignment kernel $K_B^k$:  $\mathcal{X}^n \times \mathcal{X}^n \to \mathbb{R}$  is defined as
\begin{equation}\label{base}
K_B^k (X,Y) = \underset{\beta \in B(X,Y)}{max}\; W(\beta) \;\;\;\; where \;\;\;\; W(\beta) = \sum_{(x,\beta(x))} k(x,\beta(x)),
\end{equation}
where $k$ is a strong  kernel defined on $\mathcal{X} \times \mathcal{X}$ \cite{kriege2016valid}.
The above concept can be extended to the case where the underlying domain consists of sets with unequal cardinality by adding dummy objects $d$ to the smaller set, where $k(d,x)=0, \; \forall x \in \mathcal{X}$. The concept of strong kernel is discussed below.



\subsubsection{Strong kernels and hierarchies} \label{strong}

The strong kernel can be explained using two different concepts:

1. In terms of an inequality constraint on the kernel values.

A function $k : \mathcal{X} \times \mathcal{X} \rightarrow \mathbb{R}^+$ is called strong kernel if $k(x,y) \geq \; min \; \{k(x,z), k(z,y)\} \;\;\; \forall x, y, z \in \mathcal{X}$. That is, once we consider $x$ and $y$, there is no other element $z$ in $\mathcal{X}$ where both $x$ and $y$ are more similar to $z$ than themselves.

2. In terms of hierarchy defined on the domain of the kernel.

The hierarchy can be constructed  in the form of a rooted tree, $T$ as follows. We assume that the leaves of $T$ corresponds to elements in $\mathcal{X}$. It has to be noted that tree forms a nested structure, that is, each inner node in $T$ apart from the leaves corresponds to a subset of elements in $\mathcal{X}$. For example, in an example hierarchy in Figure \ref{heir}, the node $v$ is an inner vertex that corresponds to nodes $a$, $b$ in $\mathcal{X}$ and node $r$ corresponds to node $c$ in $\mathcal{X}$.

A weight is defined to each of the nodes in $T$. Let $w : V (T) \to \mathbb{R}^+$ be a weight function such that $w(v) \geq w(p(v))$ for all $v$
in $T$ where $p(v)$ denotes parent node of the node $v$, and $V(T)$ is the set of nodes in $T$. The tuple $(T, w)$ is referred as hierarchy. It has to be noted that base kernels which are strong in $\mathcal{X} \times  \mathcal{X} $mentioned earlier can be derived from the hierarchy structure \cite{kriege2016valid}.

\begin{figure}[]
	\centering
	\includegraphics[scale=0.4]{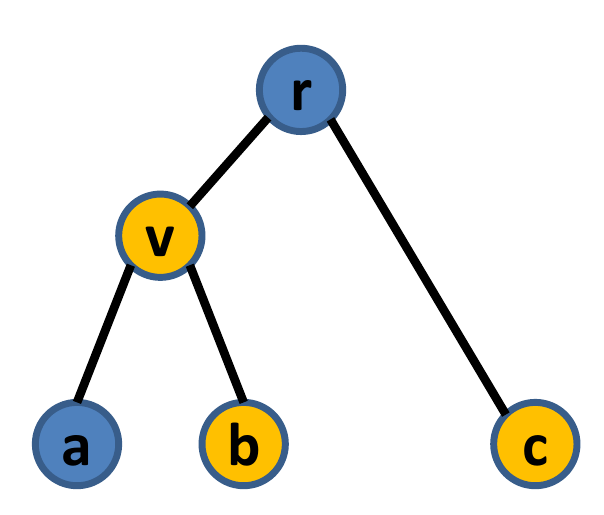}
	\caption{Example for hierarchy}
	\label{heir}
\end{figure}

With the above concepts, Kriege et.al\cite{kriege2016valid} proved that \textit{a kernel $k$ on $\mathcal{X} \times  \mathcal{X}$ is strong if and only if it is induced by a hierarchy on $\mathcal{X}$}.

It has to be noted that the hierarchy corresponding to a base kernel $k$ is not unique. Considering this, to prove that a kernel is an optimal assignment on a structured data, it is enough to prove that the underlying base kernel is strong
and there exists a hierarchy which induces the corresponding  base kernel from which equation \ref{base} can be calculated.

Examples for optimal assignment kernels are kernels defined on vertices and edges, WL-optimal assignment kernel by Kriege et.al \cite{kriege2016valid}, and optimal assignment kernels on molecular graphs by Frohlich et.al \cite{frohlich2005optimal}. 


\section{Neighborhood Preserving Kernel}\label{sec:npr}
In this section  we discuss the design aspects of the neighborhood preserving kernel we formulated.

\subsection{Neighborhood preserving regions}

The neighborhood preserving (NP) kernels process the structurally similar regions of the argument graphs. In our design, the concept of structural similarity is defined using neighborhood preserving property, whose concept is given below.


To formalize the neighborhood preserving regions, 1-dimensional WL color refinement \cite{weisfeiler1968reduction} is done on the graphs.  The WL color refinement algorithm creates a representation for each node in the graph in the form of a string. The node label of the node is the first element of the string and the remaining elements are node labels of the neighboring nodes in lexicographic order. This string representation is assigned a new label through a hashing function and the above process is repeated with the new assigned label. If the graphs are devoid of any node labels, the same label for all nodes can be assumed using some strategy: for example, the node label can be taken as  the degree of the node  in the first iteration. A product graph \cite{gartner2003graph} is then constructed to find the structural similar regions, where the product graph is defined as follows :


\begin{definition}{[\textbf{Direct product graph}]}:
	Let $G=(V,E)$ and $G'=(V',E')$ be two graphs under consideration. The direct product graph, $G_P = G \times G'$, represented as $(V_{P},E_{P})$ is defined as
	\begin{equation*}
	V_{P} = \{(u,u') \in V \times V' : l_C(u)=l_C(u')   \}
	\end{equation*}
	\begin{equation*}
	\begin{split}
	E_{P}
	=\big\{ &\big((u,u'),(v,v')\big) : (u,v) \in E \wedge (u',v') \in E' \wedge \\
	& l((u,v)) \; =\; l((u',v')) \big\}
	\end{split}
	\end{equation*}
\end{definition}
An example is shown in  figure \ref{fig1}(a),(b),(c) of two sample graphs, their WL coloring, and product graph formation. From the product graph the set of neighborhood preserving or structurally  similar edges of the two given graphs based on preserving its structure and node/edge labels information can be deducted as per the definition given below.

\begin{figure*}[h]
	\centering
	\includegraphics[scale=0.3]{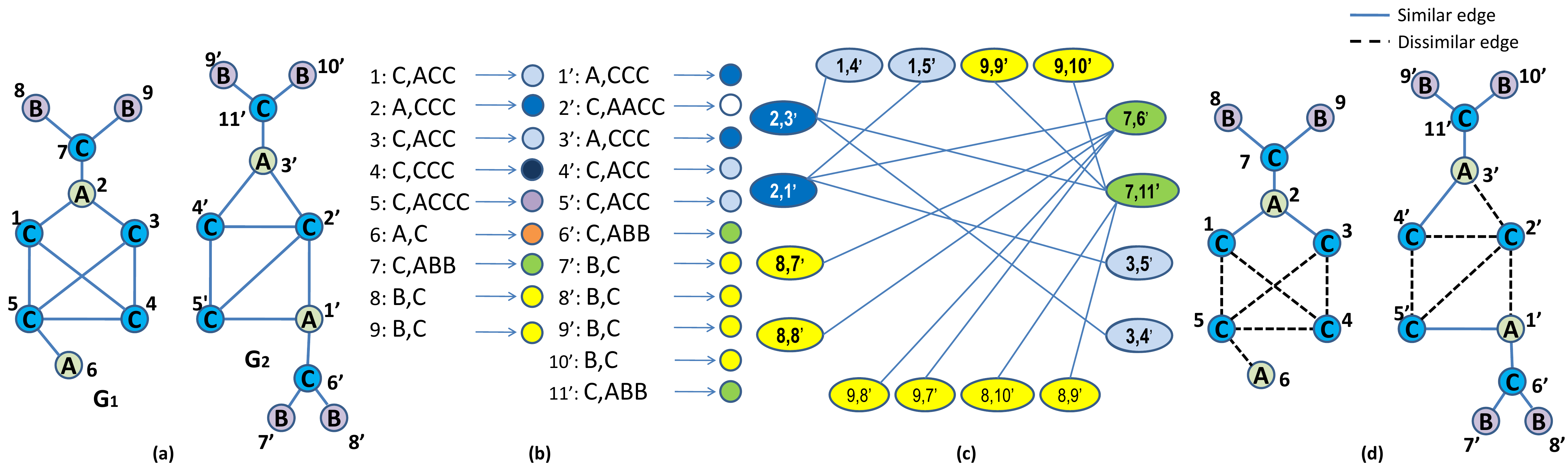}
	\caption{(a) Two sample graphs $G_1$ and $G_2$, (b) WL color refinement, (c) direct product graph, (d) separation of neighborhood preserving or structurally similar edges(bold lines) and dissimilar edges(dashed lines) }
	\label{fig1}
\end{figure*}

\begin{definition}{[\textbf{Neighborhood preserving edges}]}:
	
	The 	set of neighborhood preserving edges in graph $G$ with respect to $G',$ is $$\mathcal{S}_{G:G'} = \{(u, v) \in E : ((u, u'),(v, v'))  \in E_{P}   \}$$
	
\end{definition}

An example of the extraction of neighborhood preserving edges of the graphs is shown in figure \ref{fig1} (d). Note that the WL color refinement algorithm basically processes the information about neighborhood of the nodes. Now, corresponding to each $(u,v) \in \mathcal{S}_{G:G'} $  there exists atleast one $ (u',v') \in E' $ such that $u, u'$ as well as $v, v'$ has the same label and neighborhood information.  Hence the edges are termed as \textit{neighborhood preserving edges.} This helps to process kernel computation with subgraphs having well-defined correspondences in terms of the neighborhood of the component nodes and it also gives a visualization of the structural similarity between the graphs.


We designed a kernel that defines the similarity of the argument graphs with the aid of  neighborhood preserving edges, whose description is given below.


\subsection{ Neighborhood preserving edge kernel} \label{sec:npek}

Let  $\kappa_V$ and $\kappa_E$ be  positive semi definite kernels defined on $V \times V'$  and $E \times E'$ respectively and $\lambda \in \mathbb{R}^+$ be a weight function. Then the neighborhood preserving edge kernel, $K_{NPE}$, is defined as


\begin{equation}\label{Knpe}
\begin{split}
K_{NPE} (G,G')&= \lambda(G,G').\\
&  \sum_{(u,v) \in E} \sum_{(u',v') \in E'}  \kappa_V(u,u'). \kappa_E \big((u,v),(u',v')\big).  \kappa_V(v,v')
\end{split}
\end{equation}
where $\kappa_V$ is defined in terms of the attribute information of argument nodes and $\kappa_E$ is defined as
\begin{equation*}
\begin{split}
\kappa_E \big((u,v),(u',v')\big)
&=\delta\big( l(u,v), l(u',v')\big). \\
&\delta (l_C(u),l_C(u')) . k(\big((u,v),(u',v')\big)). \delta (l_C(v),l_C(v'))
\end{split}
\end{equation*}
where $\lambda$ is a normalization constant,  $\delta$ is  the Kronecker delta function and $k$ is a valid kernel defined in terms of attributes on edges. If no attributes are present on edges, the value of  $k$ can be taken as 1. Note that delta functions ensure that the edges processed for the kernel computation are from neighborhood preserving edges only and hence the name for the kernel. The function of weight $\lambda$ is to give a normalizing effect to avoid artificially high similarity value if the argument graph sizes vary unevenly and if some  WL refined labels in a graph are very high compared to its counterpart resulting in a large number of edges of a particular type.

\begin{theorem}\label{theorem1}
	The neighborhood preserving edge kernel is a valid kernel.
\end{theorem}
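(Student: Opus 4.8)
The plan is to exhibit $K_{NPE}$ as a symmetric, positive semidefinite kernel built from simpler valid kernels by the standard closure operations --- pullback along a map, the Schur (Hadamard) product theorem, and nonnegative sums --- together with the R-convolution framework recalled in Section~\ref{sec:frameworks}, which is exactly the shape of the inner double sum.

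First I would check that every factor appearing in the summand is a valid kernel on the appropriate domain. The node kernel $\kappa_V$ is PSD on $V\times V'$ by hypothesis, and the edge-attribute kernel $k$ is PSD by hypothesis. Each Kronecker-delta factor is PSD: $\delta(a,b)=\langle e_a,e_b\rangle$ is the inner product of one-hot encodings over the label set, hence a kernel, and $\delta(l(u,v),l(u',v'))$, $\delta(l_C(u),l_C(u'))$, $\delta(l_C(v),l_C(v'))$ are pullbacks of this kernel along the maps $(u,v)\mapsto l(u,v)$, $(u,v)\mapsto l_C(u)$, $(u,v)\mapsto l_C(v)$ on the edge set, so they are kernels on $E\times E'$. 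Consequently $\kappa_E$, a product of four PSD kernels on $E\times E'$, is PSD by the Schur product theorem, and symmetric since each factor is. By the same reasoning $\kappa_V(u,u')$ and $\kappa_V(v,v')$, read as functions on $E\times E'$ via the source and sink projections, are kernels, so the full summand $\kappa_V(u,u')\,\kappa_E((u,v),(u',v'))\,\kappa_V(v,v')$ is a symmetric PSD kernel on ordered edges.

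Next I would lift this edge kernel to a graph kernel. This is precisely an R-convolution decomposition in which a graph is broken into its edges, with parts $(u,(u,v),v)$ and component kernels $\kappa_V$, $\kappa_E$, $\kappa_V$; Haussler's theorem (Section~\ref{sec:frameworks}) then yields that $\sum_{(u,v)\in E}\sum_{(u',v')\in E'}(\cdots)$ is a valid kernel on graphs. Concretely, if $\phi$ is a feature map for the edge kernel above, then $\Phi(G):=\sum_{(u,v)\in E}\phi(u,v)$ is a feature map whose inner product reproduces the double sum, hence it is PSD, and symmetry is inherited. One small bookkeeping point is the undirected convention, i.e.\ whether each edge contributes both orderings $(u,v)$ and $(v,u)$; but summing a symmetric PSD kernel over a symmetric index set is still PSD, so either convention is admissible.

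Finally I would treat the normalisation weight $\lambda(G,G')$, which is the only genuinely delicate step. The double sum is PSD irrespective of $\lambda$, but multiplying by $\lambda$ preserves positive semidefiniteness only when $\lambda$ is itself a valid (symmetric PSD) kernel on graphs. In practice $\lambda$ is taken of product form $\lambda(G,G')=f(G)f(G')$ --- for instance an inverse product of edge counts --- which is trivially PSD and symmetric, and then $K_{NPE}(G,G')=\langle f(G)\Phi(G), f(G')\Phi(G')\rangle$ is manifestly a valid kernel; more generally any PSD $\lambda$ works by the Schur product theorem. So the argument reduces to: (i) each building block is PSD; (ii) the Schur product theorem together with R-convolution close the double sum into a graph kernel; (iii) the stated normalisation has product form, so multiplying by it preserves validity. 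I expect step (iii) --- making the admissible form of $\lambda$ explicit and confirming it --- to be the point the proof must be careful about; the rest is routine closure-property bookkeeping.
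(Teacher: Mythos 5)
Your proposal is correct and lands in the same framework as the paper --- an R-convolution over the decomposition of a graph into an edge plus ``the rest'' --- but it differs in two respects worth noting. First, you keep the Kronecker deltas explicit and certify them as kernels via one-hot feature maps and the Schur product theorem; the paper instead absorbs them into the index set, stratifying the double sum by the \emph{WL refined edge address} $\epsilon\in\Lambda$ so that only label-matching edge pairs are ever convolved, and then only needs to check that the residual factor $k_{edge}(e,e')=\kappa_V(u,u')\,k(e,e')\,\kappa_V(v,v')$ is a valid kernel. The two devices are interchangeable; yours is more elementary, the paper's yields the normalized form it actually computes. Second, and more substantively, your handling of the weight is more careful than the paper's. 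The paper proves validity for the stratified form with per-address weight $\tfrac{1}{|E_\epsilon\times E'_\epsilon|}=\tfrac{1}{|E_\epsilon|}\cdot\tfrac{1}{|E'_\epsilon|}$ --- which is safe precisely because it factors as $f_\epsilon(G)f_\epsilon(G')$, the product form you identify --- and then merely remarks that the global $\lambda(G,G')$ of equation~\eqref{Knpe} ``can be considered as the counterpart of this term.'' As you correctly observe, an arbitrary symmetric $\lambda(G,G')$ multiplying a PSD Gram matrix need not preserve positive semidefiniteness; validity of equation~\eqref{Knpe} as literally written requires $\lambda$ itself to be PSD (e.g.\ of product form). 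Flagging and resolving that point is a genuine improvement over the paper's presentation rather than a deviation from it.
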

\begin{proof}

	We prove $K_{NPE}$  as a R-convolution kernel \cite{haussler1999convolution}. Let $\Sigma^3$  the set of   strings of length three formed by members of $\Sigma_C$ and $\Sigma$, that is,   $\Sigma^3=\{xyz \vert x, z \in \Sigma_C , x \leq z, y \in \Sigma \}$.  Consider two graphs $G$ and $G'$ and $\Lambda \subseteq \Sigma^3$ where   $$ \Lambda= \underset{(u,v) \in E}{\cup } l_C(u)\odot l(u,v) \odot l_C(v) \;\;\bigcap\;\; \underset{(u',v') \in E'}{\cup } l_C(u') \odot  l(u',v') \odot l_C(v')  $$ where $\odot$ is a concatenation operator, $l_C(u) \odot  l(u,v) \odot l_C(v), \;l_C(u') \odot  l(u',v') \odot l_C(v') \in \Sigma^3$ .  We call elements of $\Lambda$ as \textit{WL refined edge address}.
	
	
	Now we define a relation $R$ corresponding to each member  $ \epsilon \in \Lambda.$ Let $R(e,G^\dagger , G; \epsilon)$ be a relation where $R(e,G^\dagger, G; \epsilon) =1$ iff $G^\dagger$ is the graph obtained from $G$ if an edge $e$ corresponding to a member $\epsilon$ in $\Lambda$ is removed. Now $R^{-1}(G;\epsilon)=\big\{(e,G^\dagger) : R(e,G^\dagger, G; \epsilon) =1\big\}$ is the decomposition of a graph into an edge with \textit{WL refined edge address} $\epsilon$ and rest of the graph. Now  $K_{NPE}$ is defined as
	\begin{equation}\label{Rconv}
	K_{NPE}(G,G') = \sum_{\epsilon \in \Lambda}\; \frac{1}{|E_\epsilon \times E_\epsilon'|} \sum_{\underset{(e',G^{\dagger'}) \in R^{-1}(G';\epsilon)}{\underset{(e,G^\dagger) \in R^{-1}(G;\epsilon)}{}}} k_{edge}(e,e') \times k_{triv}(G^{\dagger},G^{\dagger'})
	\end{equation}
	where $E_\epsilon,E_\epsilon'$ are set of edges in $G,G'$ corresponding to  \textit{WL refined edge address} $\epsilon$, $E_\epsilon \times E_\epsilon'$ being their cartesian product, $k_{triv}$ is a trivial kernel whose value is always 1 and \begin{equation}\label{Kedge}
	k_{edge}(e,e') = \kappa_V(u,u') \times k(e,e') \times \kappa_V(v,v')
	\end{equation}
	where $e=(u,v)$ and $e'=(u',v')$. Now $k_{edge}$ is  a valid kernel \cite{cortes2004rational}. Also, $|E_\epsilon \times  E_\epsilon'|$ is a well defined function and  hence $K_{NPE}$ is a R convolution kernel.
\end{proof}
The effect of the cardinality of $\Lambda$ has to be considered and hence included   the weight $\frac{1}{|E_\epsilon \times  E_\epsilon'|}$ in equation \eqref{Rconv}. The  $\lambda$ in equation \eqref{Knpe}  can be considered as  the counterpart of this term.


By constructing a product graph, the information used in  $K_{NPE}$ kernel can be extracted as explained below.


\begin{theorem}\label{theorem:2}
	Each edge in the direct product graph corresponds to a product calculation in convolution operation defined in equation \eqref{Rconv} except for a few edges of the form $\big((u,u'),(v,v')\big) \in E_P$ where $l_C(u) = l_C(v) = l_C(u') = l_C(v')$ and $l(u,v) = l(u',v')$.
\end{theorem}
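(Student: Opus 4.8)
The plan is to build an explicit correspondence between the edges of the direct product graph $G_P$ and the individual product terms $k_{edge}(e,e')\times k_{triv}(G^\dagger,G^{\dagger'})$ occurring in \eqref{Rconv}, and then to locate exactly where that correspondence degenerates. The first step: take any $\big((u,u'),(v,v')\big)\in E_P$. By the definitions of $V_P$ and $E_P$ we have $l_C(u)=l_C(u')$, $l_C(v)=l_C(v')$, $l(u,v)=l(u',v')$, and $(u,v)\in E$, $(u',v')\in E'$. Put $e=(u,v)$, $e'=(u',v')$, and let $\epsilon$ be the \emph{WL refined edge address} obtained by ordering the two $\Sigma_C$-symbols of $l_C(u)\odot l(u,v)\odot l_C(v)$; then $\epsilon\in\Lambda$ and $e\in E_\epsilon$, $e'\in E_\epsilon'$. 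Since $R^{-1}(G;\epsilon)$ is indexed by $E_\epsilon$ (removing $e$ determines $G^\dagger$), the pair $(e,e')$ picks out a definite product term in the $\epsilon$-block of \eqref{Rconv}; this defines the map. Conversely, given such a term, indexed by $\epsilon\in\Lambda$ and $(e,e')\in E_\epsilon\times E_\epsilon'$ with $e=(u,v)$, $e'=(u',v')$, the address $\epsilon$ forces $\{l_C(u),l_C(v)\}=\{l_C(u'),l_C(v')\}$ and $l(u,v)=l(u',v')$, so after renaming the (unordered) endpoints we may take $l_C(u)=l_C(u')$ and $l_C(v)=l_C(v')$, and then $\big((u,u'),(v,v')\big)\in E_P$ maps back to it; hence the correspondence is onto.

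The heart of the argument is injectivity, and this is where the exception surfaces. Because an edge of a graph is an unordered pair, a product edge $\big((u,u'),(v,v')\big)$ in fact records not only the pair of graph edges $e=(u,v)$, $e'=(u',v')$ but also a bijection between their endpoint sets ($u\!\leftrightarrow\!u'$, $v\!\leftrightarrow\!v'$). Two distinct product edges therefore collapse to the same pair $\{e,e'\}$ precisely when the other bijection, encoded by $\big((u,v'),(v,u')\big)$, is also a product edge. That in turn requires $l_C(u)=l_C(u')$, $l_C(v)=l_C(v')$ \emph{and} $l_C(u)=l_C(v')$, $l_C(v)=l_C(u')$, which is equivalent to $l_C(u)=l_C(v)=l_C(u')=l_C(v')$ together with $l(u,v)=l(u',v')$ — exactly the exceptional family in the statement. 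When this condition holds, both product edges exist, are genuinely distinct (since $u\neq v$ and $u'\neq v'$), and sit in the same convolution block $\epsilon$ with the same edge-attribute factor $k(e,e')$, differing only in which arguments the two $\kappa_V$ factors pair; off this family the correspondence is a bijection. The conclusion is then the stated one: every product-graph edge yields a product term of \eqref{Rconv}, uniquely so except on the edges with all four refined labels equal.

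The main obstacle I expect is the undirected-edge bookkeeping rather than any deep point: justifying that one may ``rename endpoints'' freely (legitimate because $(u,v)=(v,u)$ in $G$, in $G'$, and in $G_P$), checking that the two exceptional product edges can never accidentally coincide (ruled out by $u\neq v$, $u'\neq v'$, since $E$ has no loops), confirming that the case split on whether $l_C(u)=l_C(v)$ is exhaustive, and being precise about the exact sense in which the two exceptional edges ``correspond to the same product calculation'' given the endpoint-pairing ambiguity inside $k_{edge}$. All of this is elementary; the only real care required is in stating the exception without slack.
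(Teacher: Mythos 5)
Your proposal is correct and follows essentially the same route as the paper's proof: both set up the correspondence between product-graph edges and the convolution terms of equation \eqref{Rconv}, and both trace the failure of injectivity to the two possible endpoint bijections, which coexist exactly when $l_C(u)=l_C(v)=l_C(u')=l_C(v')$ and $l(u,v)=l(u',v')$. Your version is somewhat more explicit about surjectivity and about the undirected-edge bookkeeping (and rightly flags the residual ambiguity in which $\kappa_V$ pairing the exceptional term should use, a point the paper leaves implicit), but the underlying argument is the same.
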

\begin{proof}
	\begin{figure}[h]
		\centering
		\includegraphics[scale=0.35]{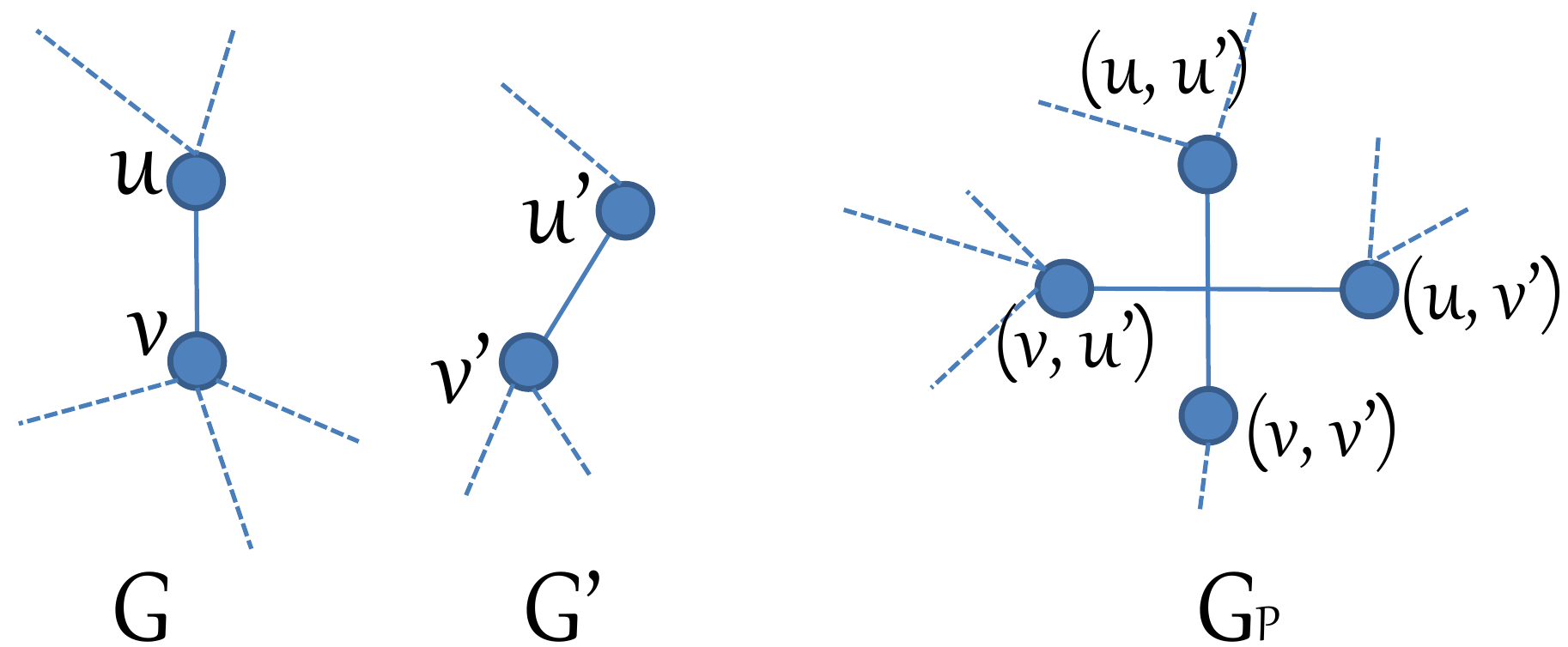}
		\caption{Two nodes $u, v$ in the graph $G$ and two nodes $u', v'$ in the graph $G'$ with the same WL refined labels and corresponding edges having a similar label. The product graph $G_P$ containing an additional pair of nodes and an additional edge.   }
		\label{fig2}
	\end{figure}
	Consider a particular edge $(u,v) \in E$  in a graph $G$ which is neighborhood preserving. Note that the nodes in the direct product graph contain all combination of nodes from graphs $G$ and $G'$ whose label and neighborhood information is identical. Hence the edge $(u,v)$ is a  part of set of edges of the form $\big((u,u'),(v,v')\big)$ in direct product graph  where $u' \in \{x \in V': l_C(x) = l_C(u)\}$, $v' \in \{y \in V': l_C(y) = l_C(v)\}$, $(u',v') \in E'$ and $l(u,v) = l(u',v')$. Hence these set of edges correspond to a convolution of edge $(u,v) \in E$ with its counterpart edges in $E'$. Similarly we can find out other convolutions corresponding to every edge in $G$.
	
	But a duplication arises in a special case when $l_C(u) = l_C(u') = l_C(v) = l_C(v')$ and $l(u,v) = l(u',v')$. Consider such a case as shown in the figure \ref{fig2}. We can see that in the direct product graph, $G_P$, four nodes are created with two edges. But as per convolution operation, the edge pair $(u,v)$ and $(u',v')$ shall be counted only once. Hence avoiding this duplication, $K_{NPE}$ can be calculated from $G_P$.

\end{proof}

The NPE kernel can be found out for multiple iterations of WL color refinement. In this case the final kernel is taken as the sum of kernels in the individual iteration. 

The neighborhood preserving edge kernel processes  node and edge attributes with the labels on them acting as a guidance for the R-convolution process.  For processing the node and edge labels alone, another kernel named neighborhood preserving optimal edge assignment kernel is formulated, whose description is given in the next  section. The objective is to analyze the importance of the label information.


\subsection{Neighborhood preserving optimal edge assignment kernel}

The neighborhood preserving optimal edge assignment kernel $K_{NPO}$ processes only node labels and  defined as


\begin{equation}\label{koea}
K_{NPO} (G,G')= 	\sum_{\epsilon \in \Lambda}\; \text{minimum}(|E_\epsilon|, |E_\epsilon'|)
\end{equation}
The minimum value is taken as it represents a normalized score of structural similarity. Note that for $K_{NPO}$ computation also, the edges processed are only neighborhood preserving ones. Note that $K_{NPO}$ can also be defined for multiple WL color refinement steps where kernel value in individual iteration is summed up. 

With the aid of an optimal assignment kernel $K_B(E,E')$, we prove  $K_{NPO} (G,G')$ as a valid kernel. We define $K_B(E,E')$ as,






\begin{equation}\label{main}
K_{B}(E,E') = \underset{\beta \in B(E,E')}{max} \;\;\; \underset{\big((u,v),\beta(u,v)\big)}{\sum} \;\; \sum_{i=1}^{h} k_b^i\big( (u,v), \beta(u,v) \big)
\end{equation}
where $B(E,E')$ is the set of all bijections between the members of the set $E$ and $E'$ corresponding to the base kernel $k_b^i$ at iteration:$i$ of WL color refinement, $(u,v)\in E$, and $\beta(u,v) \in E'.$ Here we assume that the sets among $E$ and $E'$ that contain the least cardinality have enough dummy members to make up for the difference between their cardinality and WL iteration is done $h$ times.

The base kernel $k_b^i$ is defined as,

\begin{equation} \label{base kernel def}
k_b^i\big( (u,v), (u',v') \big) =\left\{
\begin{array}{ll}
\begin{split}
1,&\; if \;l_C^i(u) = l_C^i(u') \wedge l_C^i(v) = l_C^i(v') \\& \wedge
l\big( (u,v)\big) = l\big( (u',v')\big)
\end{split}
\\ 0, \; otherwise  \\
\end{array}
\right.
\end{equation}
where $l_C^i(.)$ indicates WL color of the concerned node at iteration:$i$ of the WL color refinement. The base kernel defined above is strong as the cardinality of its range set is two. \cite{kriege2016valid}.





The hierarchy tree corresponding to base kernel $k_b$ can be constructed in the following way. Consider the application of the WL color refinement algorithm on the given graphs. Corresponding to each  WL refined edge address   obtained in the first iteration, a root node is created. The elements of WL refined edge addresses obtained for iteration $i > 1$, forms the nodes for the level $i$ of the hierarchy tree. As the nodes in the higher level are generated from the nodes in the lower level, a parent-child relationship between two subsequent levels can be established. Thus a tree structure $T$  can be built by adding edges between parent and children.

\begin{lemma} \label{lemma:3}
	$K_{B}(E,E')$ can be calculated from the hierarchy.
\end{lemma}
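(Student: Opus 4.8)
The plan is to invoke the characterization of valid optimal assignment kernels from \cite{kriege2016valid}: a kernel of the form \eqref{base} is valid provided the underlying base kernel is strong and is induced by a hierarchy, and in that case the optimal assignment value can be computed directly from the hierarchy via a histogram-intersection–style formula. Since the paper has already argued that each $k_b^i$ is strong (its range has cardinality two) and has already described how to build the tree $T$ from the WL refined edge addresses across iterations $i=1,\dots,h$, the task reduces to verifying that $(T,w)$ really is a hierarchy inducing $k_b = \sum_{i=1}^h k_b^i$, and then reading off the closed-form evaluation.

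First I would make the weights explicit. Assign to every node of $T$ at level $i$ (counting the roots as level $1$) a weight $w(v)$ so that the additive weight along the root-to-leaf path through a given WL refined edge address equals exactly the value the base kernel should output when two edges agree down to that level; with $k_b=\sum_i k_b^i$ and each $k_b^i\in\{0,1\}$ the natural choice is $w(v)=1$ for every non-root node (and $w(\text{root})=0$, or absorbed appropriately), which trivially satisfies the monotonicity requirement $w(v)\ge w(p(v))$ from Section~\ref{strong}. Then I would check the defining property of an induced kernel: for two edges $e=(u,v)$ and $e'=(u',v')$, the value $k_b(e,e')$ equals the total weight of the deepest common ancestor's root-path, i.e. the sum of weights of the shared prefix of their addresses in $T$. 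This is exactly $\sum_{i=1}^h k_b^i(e,e')$ because $k_b^i(e,e')=1$ precisely when the two edges have matching WL-refined edge address at iteration $i$, and the nested structure of WL refinement guarantees that agreement at level $i$ forces agreement at all levels $<i$ — so the set of iterations on which they agree is an initial segment, matching a root-path prefix in $T$. Having established that $k_b$ is strong and induced by $(T,w)$, Theorem of \cite{kriege2016valid} gives that $K_B(E,E')$ as in \eqref{main} is a valid kernel and, more to the point here, that it can be evaluated without solving an assignment problem: for each node $v\in V(T)$ let $E_v$ (resp.\ $E'_v$) be the multiset of edges of $G$ (resp.\ $G'$) whose address passes through $v$; then
\begin{equation*}
K_B(E,E') = \sum_{v \in V(T)} w(v)\,\min\!\bigl(|E_v|,\,|E'_v|\bigr).
\end{equation*}
With the weight choice above this is a nonnegative integer combination of the counts $\min(|E_v|,|E'_v|)$ aggregated over the levels, which is precisely the quantity computable from the hierarchy.

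The main obstacle I anticipate is not the algebra but pinning down the bookkeeping so that the claimed equivalence is airtight: one must be careful that the nodes of $T$ at level $i$ are indexed by WL refined edge addresses at iteration $i$ rather than by the edges themselves, that the parent map really is well defined (each iteration-$i$ address has a unique iteration-$(i-1)$ predecessor, which follows from determinism of the WL hash), and that the dummy edges padded into the smaller of $E,E'$ are handled so they contribute $0$ — consistent with $k(d,x)=0$ in the unequal-cardinality extension of \eqref{base}. A secondary subtlety is that a given edge of $G$ may appear in several $E_\epsilon$'s across different iterations, so the "address" of an edge is really the whole root-to-leaf path, not a single label; once that is set up, matching \eqref{main} to the $\min$-over-the-tree formula, and then to \eqref{koea} when $h=1$, is routine.
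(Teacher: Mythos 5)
Your proposal is correct and follows essentially the same route as the paper: both build the hierarchy $T$ from the WL refined edge addresses and then evaluate $K_B(E,E')$ as a histogram intersection over the tree nodes, your $\sum_{v\in V(T)}\min(|E_v|,|E'_v|)$ (with unit additive weights) being exactly the paper's $\sum_i \min(G_V(i),G_V'(i))$ computed by its tree-traversal algorithm. You are merely more explicit about the verification details (weight monotonicity, the deepest-common-ancestor property via nestedness of WL agreement, and dummy elements), which the paper delegates to the cited result of Kriege et al.
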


\begin{proof}
	With the help of the hierarchy $T$, $K_{B}$ can be calculated as a histogram intersection kernel \cite{kriege2016valid} as follows.  Corresponding to each graph $G$, a histogram vector $G_V$ of length $V(T)$ is created, where $V(T)$ is the set of nodes of the tree. The $i^{th}$ element of $G_V$  is the number of times it produces the $i^{th}$ element  of $V(T)$ during the WL color refinement algorithm procedure. The histogram vector calculation is explained  in  Algorithm \ref{alg3}.
	

	\begin{algorithm}
		\label{alg3}
		\DontPrintSemicolon
		\SetAlgoLined
		\SetKwInOut{Input}{Input}\SetKwInOut{Output}{Output}
		\Input{The hierarchy $T$ for $h$ iterations of WL color refinement algorithm and a graph $G=(V,E)$.}
		\Output{ Histogram vector $G_V$}
		\BlankLine
		Assign a unique location starting from 1 to $|V(T)|$ to all the nodes in $T$ \;
		Initialize $G_V$ as $|V(T)| \times 1$ vector  \;
		\For{every edge $e$ in $E$}{
			Find the leaf node $n$ in $T$ corresponding to \textit{WL refined edge addresses} of edge $e$ at WL iteration: $h$\;
			$i = h$\;
			\While{$i \geq 1$}{
				$G_V($location$(n)) = G_V($location$(n)) + 1$\;
				$n$ = parent($n$) \;
				$i = i-1$ \;	
			}	
		}
		
		\caption{Computation of histogram vector $G_V$}
	\end{algorithm}
	
	Note that the above algorithm corresponds to a tree traversal for each edge in a graph starting from the leaf nodes. Since the hierarchy contains all possible occurrences of \textit{WL refined edge addresses} in $h$ iterations of WL color refinement, each visit of a node in $T$ at level $i$ with respect to the edge under consideration correspond to an occurrence of that particular \textit{WL refined edge address} characterizing the node at iteration:$i$. Hence the tree traversals as per the algorithm give the number of occurrences of \textit{WL refined edge addresses} in the entire $h$ number of WL iterations, that is,  $G_V$. Let $G_{V}$ and $G_{V}'$ be the histogram vectors of graphs $G$ and $G'$. Then $K_B(E,E')$ in equation \eqref{main} can be calculated as histogram intersection kernel of $G_{V}$ and $G_{V}'$ \cite{kriege2016valid}, that is,
	\begin{equation}
	K_{B}(E,E') = \sum_{i=1}^{|G_V|} \text{minimum} \left(G_V(i), G_V'(i)\right)
	\end{equation}
	
	It is straightforward now to establish that $K_{B}(E,E')$ for $h$ iterations of WL color refinement steps is the summation of $K_{NPO}$ kernels in individual refinement steps. That is, $$ 	K_{B}(E,E') = \sum_{i=1}^h K_{NPO}^i (G,G')$$ where $K_{NPO}^i$ is the NPO kernel at iteration $i$.
\end{proof}
On the basis of above discussion, it is clear that neighborhood preserving optimal edge assignment kernel is a valid optimal assignment kernel.

It has to be noted that $K_{NPO}$ in actual experiments is calculated as explained in Section \ref{compute methods}.

\subsection{Neighborhood preserving kernel definition}

The neighborhood preserving (NP) kernel for $h$ iterations of WL color refinement algorithm is defined as
\begin{equation} \label{main:def}
K(G,G')= \alpha \sum_{i=1}^{h}K_{NPE}^i(G,G') + (1-\alpha) \sum_{i=1}^{h}K_{NPO}^i(G,G')
\end{equation}
where  $\alpha \in (0,1) $ is a tuning parameter, $K_{NPE}^i$ is the NPE kernel and $K_{NPE}^i$ is the NPO kernel defined for $i^{th}$ iteration. The purpose of $\alpha$ is to have a trade off between the two components of NP kernel.



\subsection{Other neighborhood preserving kernels}
It can be seen that to ensure a graph kernel to process only on the neighborhood preserving regions, it is enough to take features from the edges that have a common \textit{WL refined edge address} in the graphs. The application of WL refinement iteration on WL-edge kernel \cite{shervashidze2011weisfeiler} results in the process of the neighborhood preserving regions. However, this kernel does not process attribute information. Hence the neighborhood preserving edge (NPE) kernel can be considered as a generalization of WL-edge kernel.


In the case of  WL-subtree kernel and WL-shortest path kernel \cite{shervashidze2011weisfeiler}, the regions processed may include non-neighborhood preserving edges as well. However, WL-shortest path kernel can be made to a neighborhood preserving kernel by imposing an additional constraint such that the shortest paths considered for feature extraction shall contain only neighborhood preserving edges.


It is evident that the walks in the formulated product graph corresponds to neighborhood preserving walks in the argument graphs. Hence random walk kernel \cite{borgwardt2005protein} can be made neighborhood preserving. It is possible to distinguish the shortest paths from  the neighborhood preserving walks. Hence it is possible to make GraphHopper kernel neighborhood preserving and WL-shortest path kernel \cite{shervashidze2011weisfeiler} can be generalized to attributed graphs in the same way as the proposed NPE kernel generalizes the WL edge kernel.

If we insert $d$-edges in the direct product graph definition, as explained in \cite{kriege2012subgraph}, the cliques corresponds to neighborhood preserving subgraph isomorphism and hence subgraph matching kernel \cite{kriege2012subgraph} can be made neighborhood preserving.
\subsection{Recursive computation of NPE kernel from product graph}
We prove with the following theorem that the  neighborhood preserving edge kernel can be computed at each WL iteration  from the product graph defined for the previous iteration which is obtained in a recursive fashion, i.e, the initial product graph formulation alone is enough to find out the product graphs in proceeding WL iterations without explicitly finding them.

\begin{theorem}\label{theorem:3}	
	The product graph at any iteration >1 of the WL color refinement algorithm is the subgraph of the product graph defined for the previous iteration.
	
\end{theorem}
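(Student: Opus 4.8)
The plan is to compare the vertex and edge sets of the product graph at iteration $i$ with those at iteration $i-1$ and show each is contained in the latter. The key observation is that the WL color refinement is itself a refinement: if two nodes receive the same color at iteration $i$, they must already have received the same color at iteration $i-1$, because the iteration-$i$ color is a hash of the iteration-$(i-1)$ color together with the multiset of iteration-$(i-1)$ colors of the neighbors. Formally, $l_C^i(u) = l_C^i(x)$ implies $l_C^{i-1}(u) = l_C^{i-1}(x)$ for all nodes $u,x$ (within a graph, and consistently across graphs under the standard assumption that the same hash is used). I would state this monotonicity of WL colors as the first step, citing it as a standard property of the color refinement algorithm.

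The second step is the vertex containment. A node $(u,u')$ lies in $V_P^{(i)}$ iff $l_C^i(u) = l_C^i(u')$; by the monotonicity just established this forces $l_C^{i-1}(u) = l_C^{i-1}(u')$, i.e. $(u,u') \in V_P^{(i-1)}$. Hence $V_P^{(i)} \subseteq V_P^{(i-1)}$. The third step is the edge containment: an edge $\big((u,u'),(v,v')\big) \in E_P^{(i)}$ requires (a) $(u,v)\in E$, (b) $(u',v')\in E'$, (c) $l((u,v)) = l((u',v'))$, and (d) both endpoint pairs lie in $V_P^{(i)}$. Conditions (a), (b), (c) do not depend on the iteration, and (d) implies the endpoints lie in $V_P^{(i-1)}$ by the previous step, so all four conditions for membership in $E_P^{(i-1)}$ hold. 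Therefore $E_P^{(i)} \subseteq E_P^{(i-1)}$, and together with the vertex containment this gives that $G_P^{(i)}$ is a subgraph of $G_P^{(i-1)}$, completing the proof.

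The main obstacle is making precise the monotonicity of WL colors across two different graphs — one must be careful that the hashing function used in the refinement is shared between $G$ and $G'$ (and is injective on the set of strings it encounters), so that equality of refined labels is well-defined across graphs; this is implicit in the direct product graph definition already given, and I would simply note that the construction presupposes a common label alphabet $\Sigma_C$ and a common refinement procedure. A secondary subtlety is that the edge-label condition $l((u,v)) = l((u',v'))$ and adjacency are iteration-independent, so no work is needed there; the entire content of the theorem is really the refinement-monotonicity of the node colors plus the definition of $G_P$. Once that is set up, the containment arguments are immediate.
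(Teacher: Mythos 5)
Your proposal is correct and takes essentially the same route as the paper: both arguments rest entirely on the refinement-monotonicity of WL colors (equal colors at iteration $i$ force equal colors at iteration $i-1$), combined with the observation that adjacency and edge labels are iteration-independent. The paper packages this as a contradiction ("suppose an edge of $G_{P_i}$ is absent from $G_{P_{i-1}}$\dots"), whereas you give the direct containments $V_P^{(i)} \subseteq V_P^{(i-1)}$ and $E_P^{(i)} \subseteq E_P^{(i-1)}$, which is a cleaner presentation of the same idea.
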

\begin{proof}
	The product graph ($G_{P_i}$) in iteration $:i>1$ can be obtained from the product graph ($G_{P_{i-1}}$) in iteration: $(i-1)$  by deleting certain edges. Suppose $G_{P_i}$ is the product graph obtained by deleting the whole edges of the form $((u,v),(u',v'))$ in $G_{P_{i-1}}$ where $l_{C_{i}}(u) \neq l_{C_{i}}(u') $ and $l_{C_{i}}(v) \neq l_{C_{i}}(v') $ where $l_{C_i}$ is the WL alphabet in iteration $:i$. Our argument is that the edges that can occur in $G_{P_i}$ is already embedded in $G_{P_{i-1}}$ except the deleted edges based on the above rule.
	

	Suppose there exists an edge $((y,y'),(z,z'))$ in $G_{P_i}$ that does not exist in  $G_{P_{i-1}}$. That is label and  neighborhood of $y,y'$ and $z,z'$ with respect to WL labels $l_{C_{i-1}}$ at the iteration $h=i-1$ are identical. If there neighborhood are identical at the iteration $h=i-1$, they would have formed nodes in $G_{P_{i-1}}$ and so the edge between them. Hence the edge $((y,y'),(z,z'))$ in  $G_{P_i}$ is already embedded in $G_{P_{i-1}}$. Hence $G_{P_i}$ formed with the above edge deletion rule is the subgraph of $G_{P_{i-1}}$.
	
\end{proof}


\subsection{Computational complexity analysis} \label{complexity}
With efficient sorting techniques, WL color refining for one iteration can be done in $\mathcal{O}(m)$ operations, where $m$ is the number of edges.
NPE kernel defined in equation \ref{Knpe} requires $\mathcal{O}(m^2)$ operations. Considering $d$ be the dimension of attributes, the base kernel computation is of $\mathcal{O}(d)$. Hence NPE kernel is computed in $\mathcal{O}\big( h \times N^2(m^2 \times d)\big)$ where $N$ is the dataset size.

The computational complexity of $K_{NPO}$ is  $\mathcal{O}(N^2 \times  |\Sigma^3|^3)$, where $|\Sigma^3|^3$ is the maximum possible size of $\Lambda$ bounded by $m^2$ for a pair of graphs. This makes the overall computational complexity of NP kernel to be $  \mathcal{O}\big(h \times N^2(m^2 \times d)\big)$.



\begin{algorithm}
	\label{alg1}
	\DontPrintSemicolon
	\SetAlgoLined
	\SetKwInOut{Input}{Input}\SetKwInOut{Output}{Output}
	\Input{Two graphs $G=(V,E)$ and $G=(V',E')'$, number of WL refinement iterations $h$}
	\Output{  $K_{NPE}(G_i,G_j)$ and   $K_{NPO}(G_i,G_j)$}
	\BlankLine
	
	Initialize $K_{NPE}(G_i,G_j)=0$   \;
	\While{$i \leq h$}{
		Do WL color refinement on the graphs \;
		Create direct product graph $G_P = (V_{P},E_{P})$ for $h=i$ avoiding the duplicate edges.\;
		Initialize $\Lambda(key,value)$ as empty dictionary, $k=1$ \;
		Initialize $k_v$ and $\omega$ as empty lists with values zero, $K_{NPE}(G_i,G_j)=0$ \;
		\For{every node $n_i=(u_i,u_i')$ in $|V_P|$}{
			\For{every node $n_j=(v_j,v_j')$ in $|V_P|$}{
				\If{$(u_i,v_j) \in V$ and $(u_i',v_j') \in V'$}{
					Find WL refined edge address, $\epsilon$ \;
					\eIf{$\epsilon \notin \Lambda$}{
						$\Lambda = \Lambda \oplus \epsilon$ \;
						$key(\epsilon) = k$ \;
						$k_v(k) = k_v(k) + \kappa_V(u_i,u_i') \times \kappa_E\big((u_i,v_j),(u_i',v_j')\big)\times \kappa_V(v_i,v_i')$ \;
						$\omega(k)=\omega(k)+1$ \;
					}{
						$k_v(key(\epsilon)) = k_v(key(\epsilon)) + \kappa_V(u_i,u_i') \times \kappa_E\big((u_i,v_j),(u_i',v_j')\big) \times \kappa_V(v_i,v_i')$ \;
						$\omega(key(\epsilon))=\omega(key(\epsilon))+1$ \;
					}
				}

			}		
		}
		
		\For{$i=1$ to length($k_v$)}{
			
			$K_{NPE}(G_i,G_j) = K_{NPE}(G_i,G_j) + k_v(i)/\omega(i)$\;

		}
		
		Initialize $\Lambda(key,value)$ as empty dictionary, $k=1$  and $K_{NPO}(G_i,G_j)=0$ \;
		\For{every graphs $G_i=(V_i,E_i)$ under consideration}{
			\For{every edge $e = (v_p,v_q)$}{
				Find WL refined edge address, $\epsilon$\;
				
				\eIf{$\epsilon \notin \Lambda$}{
					$\Lambda = \Lambda \oplus \epsilon$ \;
					$key(\epsilon) = k$ \;
					$F(G_i)\{k\} = 1$, $F(G_i)$ is a vector (initialized to zero) which has $|\Lambda|$ elements and in each such element, number of occurrence of the concerned WL refined edge address  $\epsilon$  is stored\;
					$k = k+1$\;}
				{
					$F(G_i)\big\{ key(\epsilon)\big\} = F(G_i)\big\{key(\epsilon)\big\} + 1$ \;
				}

			}
		}
		
		\For{every component $k$ in $\Lambda$}{
			$K_{NPO}(G_i,G_j) = K_{NPO}(G_i,G_j) + minimum(|F(G_i)\{k\}|, |F(G_j)\{k\}|)$ \;		}
		$i=i+1$\;	
	}
	
	\caption{Pairwise computation}
\end{algorithm}

\subsection{Algorithms of the pairwise and global kernel computations of NP kernel} \label{compute methods}

We can compute $K_{NPE}$ and hence $K(G,G')$ in two ways.

1. In pairwise manner with product graph.

Although the kernel computation with product graph   using recursion property can have   $n^2$ nodes in the worst case where $n$ is the number of nodes, the number of edges are bounded by $c \times m^2$ where $c$ is a factor that accounts for the edges that result in duplication as explained in Theorem \ref{theorem:2}. The computation steps are detailed in Algorithm \ref{alg1}.

2. In global manner.

We can create a list of \textit{WL refined edge addresses} derived from $\Lambda \subseteq \Sigma^3$ that occurs across the dataset and a pre-computed feature information data can be formed for each graph listing the edges corresponding to each  \textit{WL refined edge address}. Then the kernel can be computed by iterating through this precomputed feature information data. This method by default provides the provision to calculate $K_{NPO}$ because it only requires processing the cardinality of individual refined addresses in the feature information data. The computation steps are detailed in Algorithm \ref{alg2}.

\begin{algorithm}
	\label{alg2}
	\DontPrintSemicolon
	\SetAlgoLined
	\SetKwInOut{Input}{Input}\SetKwInOut{Output}{Output}
	\Input{The graph dataset $D$}
	\Output{The neighborhood preserving edge kernel matrix $K_{NPE}$ and optimal edge assignment kernel matrix $K_{NPO}$}
	\BlankLine
	
	Do WL color refinement algorithm on the graphs\;
	Initialize $\Lambda(key,value)$ as empty dictionary and $k=1$  \;
	\For{every graph, $G_i=(V_i,E_i)$ in $|D|$}{
		\For{every edge $e = (v_p,v_q)$ in $E_i$}{
			Find WL refined edge address, $\epsilon$ \;
			\eIf{$\epsilon \notin \Lambda$}{
				$\Lambda = \Lambda \oplus \epsilon$ \;
				$key(\epsilon) = k$ \;
				$F(G_i)\{k\} = (v_p,v_q)$, $F(G_i)$ is a tensor which has $|\Lambda|$ elements and in each
				such element, end vertices of the edges are stored as a list corresponding to the concerned WL refined edge address  $\epsilon$  \;

				$k = k+1$ \;}
			{
				$F(G_i)\big\{key(\epsilon)\big\} = F(G_i)\big\{key(\epsilon)\big\} \oplus (v_p,v_q)$ \;
				
			}
			
		}
		
	}
	
	Initialize $K_{NPE} = 0$ and $K_{NPO} = 0$ \;
	\For{every graph, $G_i=(V_i,E_i)$ in $|D|$ }{
		\For{every graph, $G_j=(V_j,E_j)$ in $|D|$ }{
			\If{$i<=j$}{
				Initialize $k_v=0$ \;
				\For{every element $k$ in $\Lambda$}{
					\If{$F(G_i)\{k\} \;\bigcap\; F(G_j)\{k\} \neq \phi$}{
						\For{every members $e=(v_p,v_q)$ in $F(G_i)\{k\}$}{
							\For{every members $e=(v_p',v_q')$ in $F(G_j)\{k\}$}{
								$k_v = k_v + \kappa_V(v_p,v_p') \times \kappa_E\big((v_p,v_q),(v_p',v_q')\big) \times \kappa_V(v_q,v_q')$ \;

							}
						}
						$K_{NPE}(i,j) = K_{NPE}(i,j) + \frac{k_v}{|F(G_i)\{k\}| \times |F(G_j)\{k\}|}$ \;
						$K_{NPO}(i,j) = K_{NPO}(i,j) + minimum(|F(G_i)\{k\}|, |F(G_j)\{k\}|)$ \;
					}
				}
				
				$K_{NPE}(j,i) = K_{NPE}(i,j)$ \;
				$K_{NPO}(j,i) = K_{NPO}(i,j)$ \;
			}
		}
	}
	\caption{Global computation for one iteration of WL color refinement}
\end{algorithm}

\section{Neighborhood preserving shortest path kernel}\label{sec:npspk}

It is straight forward to extend the concepts of neighborhood preserving edge kernel defined in Section \ref{sec:npek}. One disadvantage with NPE kernel is that the information that gets processed is in terms of edges only, even though the
neighborhood preserving regions are connected. Hence if we process larger subgraphs, the
kernel measure is much more accurate. Shortest paths are a good candidate for analyzing these larger connected regions.

Let $\kappa_V$ be a positive semi definite kernel defined on $V \times V'$. We assume $P,P'$ as the sets that contain shortest paths in graphs $G,G'$ respectively. Then the neighborhood preserving shortest path kernel, $(K_{NPS})$, is defined as,

\begin{equation}
\begin{split}
K_{NPS}(G,G') =  \sum_{\Pi(u_1,u_n) \in P}& \;\;\sum_{\Pi(u_1',u_n') \in P'}  \kappa_V(u_1,u_1') \times \\ &k_\delta \big(\Pi(u_1,u_n), \Pi(u_1',u_n')\big) \times \kappa_V(u_n,u_n')
\end{split}
\end{equation}

$k_\delta$ can be defined as a psd kernel as follows.

\begin{equation} \label{kdelta}
k_\delta\big(\Pi(u_1,u_n), \Pi(u_1',u_n')\big) =\left\{
\begin{array}{ll}
\begin{split}
1,\; if \; & |\Pi(u_1,u_n)|=|\Pi(u_1',u_n')| \; \wedge \\
&   \sum_{i=1}^{n}
l_C(u_i) = l_C(u_i') \; \wedge \\
&\sum_{i=1}^{n-1} l(u_i,u_{i+1}) = l(u_i',u_{i+1}')
\end{split}
\\ 0, \; otherwise  \\
\end{array}
\right.
\end{equation}

In the above definition edges involved in the shortest paths are compared for neighborhood preserving property. Note that the nodes in shortest paths are fixed based on the total order in $\Sigma_C$ and for kernel computation only attributes in source and sink nodes are considered. We can modify this to have kernel value computation of attributes of the nodes in between as well.

The above kernel can be proved as a positive semi-definite kernel in the same way as the NPE kernel. We can formulate the R-convolution relation as a decomposition of the shortest path and the rest of the graph. For each shortest path, we can assign an address like we assign an edge with a \textit{WL refined edge address}. The address can be a string where WL labels of the nodes and edge labels involved in the concerned shortest path are concatenated. With this setting, as in the case of NPE kernel in Section \ref{sec:npek}, we can define an R-convolution kernel corresponding to these addresses.
\subsection{Computational complexity analysis}

The complexity associated with finding shortest paths for a single graph is of $\mathcal{O}(V^2).$ For kernel computation, the shortest paths have to be compared against each other which is of $\mathcal{O}(V^2)$ and along with each comparison, the base kernel has to be computed twice for source and sink nodes. If the attributes are of dimension $d$, this makes the kernel computation $\mathcal{O}(V^2 \times d)$. So the overall computational complexity for a set of $N$ graphs is $\mathcal{O}(N V^2 + h N^2 V^2 d)$.

\section{Experiments} \label{sec:exp}	
The efficiency of the proposed neighborhood preserving kernels was analyzed by subjecting them on real-world data sets and compared its  performance with state-of-the-art graph kernels namely:
Shortest path(SP) kernel \cite{borgwardt2005shortest}, GraphHopper(GH) kernel \cite{feragen2013scalable}, RetGK kernels \cite{zhang2018retgk}, Graph invariant kernel (GIK) \cite{orsini2015graph}, Propogation kernels \cite{neumann2012efficient} and Hash graph kernels \cite{morris2016}.

The components of the proposed NP kernel is analyzed separately, i.e, when $\alpha = 1$ in Equation \ref{main:def} it corresponds to the NPE kernel component that processes the attributes and when $\alpha = 0$ in Equation \ref{main:def} it corresponds to the NPO kernel component that processes the labels. This helps in evaluating the role of attributes and labels separately and also the effectiveness of NP kernel where both information are utilized.

\begin{figure*}[h]
	\centering
	\includegraphics[scale=0.6]{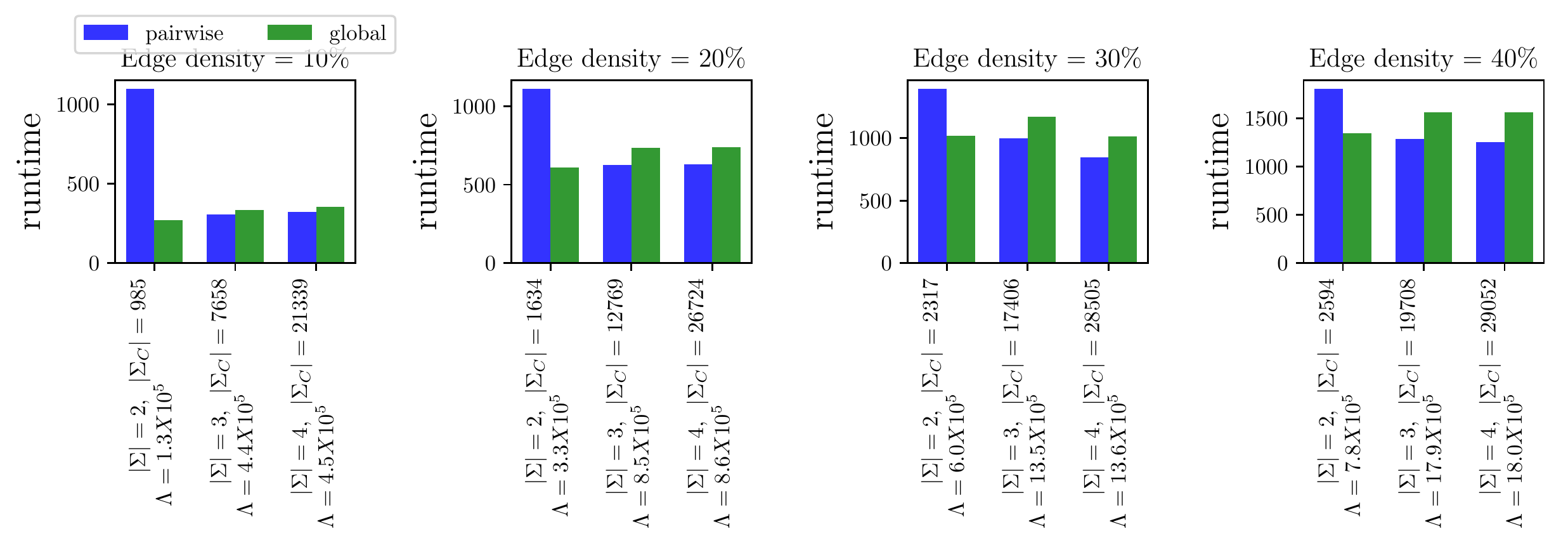}
	\caption{Runtime comparison of pairwise and global computation of NP kernel for 100 synthetic graphs at graph density = 10\%, 20\%, 30\% and 40\%  respectively. }
	\label{fig3}
\end{figure*}

\begin{table*}
	\caption{Classification accuracy of the proposed kernels with state-of-the-arts.}
	\label{acctable}
	\centering
	\scalebox{0.95}{
		\begin{tabular}{lllllll}
			
			\toprule
			Kernel     & PROTEINS & ENZYMES & BZR & COX2 & DHFR &SYN.new \\
			\midrule
			SP & 73.42 $\pm$ 1.11& \textbf{66.58 $\pm$ 4.06} & 85.76 $\pm$ 2.35 & 79.88 $\pm$ 1.73 & 79.52 $\pm$ 2.40 & 86.72 $\pm$ 3.68\\
			GH & 73.19 $\pm$ 1.79 & 66.33 $\pm$ 2.78 & 82.90 $\pm$ 2.73 & 79.66 $\pm$ 1.17 & 76.65 $\pm$ 3.21  & 88.81 $\pm$ 3.41 \\
			
			RetGK-I & \textbf{75.94 $\pm$ 1.79} & 65.67 $\pm$ 3.07 & 85.58 $\pm$ 2.20 & 78.19 $\pm$ 0.47 & 80.83 $\pm$ 2.10 & 97.08 $\pm$ 1.54\\
			RetGk-II & 74.69 $\pm$ 1.60 & 62.34 $\pm$ 2.94 & 85.76 $\pm$ 1.61 & 78.25 $\pm$ 0.35 & 81.66 $\pm$ 2.41 & 96.94 $\pm$ 1.47\\
			GIK & 72.36 $\pm$ 2.17 & 52.32 $\pm$ 3.89 & 86.31 $\pm$ 2.25 & 79.68 $\pm$ 2.41 & 81.25 $\pm$ 2.56 & 91.73 $\pm$ 2.22 \\
			Prop-diff & 72.60 $\pm$ 2.32 & 38.19 $\pm$ 2.27 & 78.46 $\pm$ 0.59 & 77.94 $\pm$ 0.47 & 72.58 $\pm$ 0.78 & 48.59 $\pm$ 1.17 \\
			Prop-WL & 74.23 $\pm$ 1.80 & 44.85 $\pm$ 1.63 & 78.92 $\pm$ 0.41 & 78.25 $\pm$ 0.35 & 73.41 $\pm$ 0.53 & 46.38 $\pm$ 1.39\\
			HGK-WL & 74.69 $\pm$ 1.98 & 64.30 $\pm$ 3.37 & 81.48 $\pm$ 1.84 & 78.50 $\pm$ 0.57 & 75.47 $\pm$ 2.40 & 81.00 $\pm$ 3.69\\
			HGK-SP & 75.57 $\pm$ 1.89 & 62.60 $\pm$ 3.00 & 82.38 $\pm$ 1.79 & 78.55 $\pm$ 0.65 & 76.61 $\pm$ 2.72 & 96.38 $\pm$ 1.92\\
			\bottomrule
			NPE$_{(\alpha=1)}$ & 71.08 $\pm$ 2.80 & 64.93 $\pm$ 2.97 & 87.13 $\pm$ 1.98 & 82.36 $\pm$ 2.02 &  82.70 $\pm$ 2.35 & 99.51 $\pm$ 0.67\\
			NPO$_{(\alpha=0)}$ & 72.74 $\pm$ 2.02 &  45.67 $\pm$ 3.24 & 88.81 $\pm$ 1.70  & 80.71 $\pm$ 2.92  & 81.07 $\pm$ 2.89   & 97.81 $\pm$ 1.53 \\
			NP & 73.66 $\pm$ 2.55 & 58.10 $\pm$ 3.16 & \textbf{89.12 $\pm$ 2.03} & \textbf{81.16 $\pm$ 2.13} & \textbf{83.98 $\pm$ 2.35} & \textbf{99.74 $\pm$ 0.64}\\
			NPS & 73.87 $\pm$ 2.47  & 58.94 $\pm$ 3.58 &  \textbf{89.24 $\pm$ 2.18} &  \textbf{81.37 $\pm$ 2.26} & \textbf{84.16 $\pm$ 2.43}  & \textbf{99.85 $\pm$ 0.71} \\
			\bottomrule
	\end{tabular}}
\end{table*}

\subsection{Datasets}
The classification datasets used for analysis were ENZYMES, PROTEINS, BZR, COX2, DHFR and SYNTHETICnew. ENZYMES is a data set of 600 protein tertiary structures obtained from the BRENDA enzyme database \cite{schomburg2004brenda}. PROTEINS is a dataset of chemical compounds with two classes (enzyme and non-enzyme) introduced in \cite{dobson2003distinguishing}.  BZR, COX2, and DHFR \cite{sutherland2003spline} are taken from the Graph data repository \cite{KKMMN2016}. SYNTHETICnew is a synthetic dataset introduced in \cite{feragen2013scalable}. The datasets are all of the binary class except ENZYMES which has six classes.

\subsection{Experimental setup}
The validation process was carried out in the following way. Using the hold-out technique 70\% of the data points were assigned for training and the remaining for testing. 10 fold cross-validation was done on training data for selecting the hyperparameters. A model was then built using the entire training data and its performance was tested on the testing data. The above process  was repeated 30 times and the results reported were averaged over these 30 iterations to nullify the effects of fold assignments.

The classification algorithm used was SVM (with Libsvm implementation \cite{chang2011libsvm}). The penalty parameter $C$ of SVM  was searched in the range [$2^{-7}, 2^{-5}, \dots, 2^{15}$]. The performance parameter used was accuracy. The number of iterations of WL color refinement algorithm was chosen from $\{1,2,3\}$ which is fixed through cross validation. The experiments were done in machine with Intel Xeon i5 2.4 GHz CPU with 80 GB RAM.

We wrote the code for SP and GIK kernel while GH, RetGK kernels, and Hash graph kernel were done with the codes published by authors.  Propagation kernel implementation and (its hyper-parameter selection) was done by the codes published by authors of the GH kernel. The  linear and Gaussian kernel \big($k(x,y) = e^{-\beta\Vert x-y \Vert^2}$\big) where $\beta = 1/d$, ($d$  the dimension of attribute information) were used as the base kernels within the state-of-the-arts. The coding was done in Matlab except for Hash graph kernel which is in Python.

For GIK kernels, WL coloring was taken as  vertex invariant. Two variants of the propagation kernel was implemented. In the 'Prop-diff' variant, the propagation scheme used is diffusion  \cite{neumann2012efficient}. In the 'Prop-WL' variant labels of the nodes are first hashed and the WL propagation scheme is used. Total variance distance was used as the hashing function in both propagation schemes. The bin width
of the hash function was set to $10^{-5}$ and the number of propagation steps for both variants was fixed through cross validation from $\{1,2,\dots 5\}$ . RetGK kernels used also have two variants. RetGK-I is the one with an explicit feature map in RKHS and RetGK-II is the one with approximated mapping. For both approaches, 50 steps of random walks are assumed. For Hash graph kernels,  WL subtree (HGK-WL) and shortest path (HGK-SP) kernels \cite{shervashidze2011weisfeiler} were employed as the base kernels and hashing function used is 2-stable Locality sensitive hashing (LSH) with bin width 1 and
label and hashed attribute information was propagated separately following the practice of authors. Number of WL refinement steps  was fixed through cross validation from $\{1,2,\dots 5\}$.

For the SYNTHETICnew dataset, the node labels were given identical labels discarding the original continuous type values, and  attributes are used as such. For the proposed kernels and GH kernel, the result reported is the best among the case between Linear and Gaussian kernels where they are employed as base kernels. Runtime reported for these kernels is done when Gaussian kernel is employed as the base kernel.

\subsection{Runtime experiments with pairwise and global computation of NP kernel}\label{comp}

This experiment is done to evaluate the pairwise and global computation schemes in calculating the NP kernel as explained in Section \ref{compute methods}.
For this experiment, 4 artificial datasets of 100 graphs with 300 nodes and graph density 10\%, 20\%, 30\%, and 40\% respectively were created. Each dataset has experimented 3 times with the nodes being selecting a random label out of an alphabet $\Sigma$ of size 2,3, and 4 labels respectively, and edge labels are assumed to be identical. The experiment is done for 2 steps of WL color refinement where calculation for pairwise computation is done as per Theorem \ref{theorem:2} and Theorem \ref{theorem:3}. The runtime (in seconds) for both approaches is plotted against the size of WL alphabet $\Sigma_C$ and size of $\Lambda$ for the label size, $|\Sigma| =$2, 3, and 4 respectively. It is assumed that unit time is taken for both approaches for base kernel computations. The variation in graph density is studied since it is the number of edges that affects the computation time as explained in Section \ref{complexity}.

It can be seen from Figure \ref{fig3} that when $|\Sigma_C|$ is small global computation takes less amount of time. In this case,  more nodes are sharing common WL labels and this results in smaller $|\Lambda|$. But the product graph involves lots of nodes and hence larger computation time for pairwise computation scheme. But as $|\Sigma_C|$ becomes larger, pairwise computation time is much better compared to the global. In this case, the nodes sharing common WL labels are relatively less and it will result in a larger $|\Lambda|$. In comparison with the smaller number of nodes in the  product graph, global computation of these larger $\Lambda$ feature information takes more time than pairwise computation. Hence it is actually $|\Lambda|$ that influences more than $|\Sigma_C|$  because as $|\Sigma_C|$ increases, there is no corresponding increase in  $|\Lambda|$ as seen from the experiments especially for cases where $|\Sigma|$=3,4.

In the case of datasets used global computation is used. But from the experiments, it is evident that in the case of datasets with larger $|\Lambda|$ pairwise computation may take lesser time than the global computation.

\subsection{Results and discussion}



The accuracy with standard deviation obtained are tabulated in Table \ref{acctable} and runtime (wall-clock time) in Table \ref{runtimetable}. We did experiments with the proposed kernels with NPE kernel alone ($\alpha$ = 1) and NPO kernel alone ($\alpha$ = 0) as well as with the formal NP kernel definition, NPS kernel and compared the results with that of state-of-the-arts.   The best results are given in bold letters.

\begin{table}
	\caption{Runtime of the proposed kernels with state-of-the-arts.}
	\label{runtimetable}
	\centering
	\scalebox{0.88}{
		\begin{tabular}{lllllll}
			
			\toprule
			Kernel     & PROTEINS & ENZYMES & BZR & COX2 & DHFR & SYNnew \\
			\midrule
			SP & >5 day& >3 day &  >2 day& >2 day & >3 day & >2 day\\
			GH & 13' 1" &  3' 4" &  58" & 1' 22"  & 3' 7"  & 4' 2"\\
			
			RetGK-I & 2' 57"  & 37" & 17"  & 24"  & 1' 15"  & 30" \\
			RetGk-II & 2.5" & 1" & 0.7" & 0.9" & 1.4" & 13.3"\\
			GIK & 22' 34" & 8' 43" & 7' 16" & 12' 49" & 30' 39" & 35' 11"\\
			Prop-diff & 7"  & 5" & 3.5"  & 3.7"  & 7.5" & 4"\\
			Prop-WL & 12" & 5.3"  & 4" & 5.2" & 9" & 8"\\
			HGK-WL & 3' 42" & 1' 25"  & 1' 2" & 1' 9" & 1' 55" & 2' 11"\\
			HGK-SP & 3' 8" & 1' 6"  & 48" & 52" & 1' 28" & 1' 38"\\
			\bottomrule
			NPE$_{(\alpha=1)}$ & 56' 51"  &  7' 16" & 3' 18"  & 5' 25"  & 11' 37"   & 17' 33"\\
			NPO$_{(\alpha=0)}$ &  6' 37"  &  41"  & 13"   &  19"  &  37"   & 45" \\
			NP &  56' 42" & 7' 10" & 3' 12"  & 5' 20" & 11' 31" & 17' 27" \\
			NPS &   >1 day & 65' 10"  &  23' 37"  & 47' 1"  & 151'   & 198'  \\			
			\bottomrule
	\end{tabular}}
\end{table}

The NP and NPS kernels have a significant improvement over state-of-the-arts in the case of BZR, COX2, DHFR, and SYNTHETICnew datasets, and in the case of PROTEINS and ENZYMES datasets their performance is reasonably good. Note that NPE kernel which processes attribute information and NPO kernel which processes only labels where $\alpha$ tuning is not required performs better than state-of-the-arts in BZR, COX2, and SYNTHETICnew datasets. In the case of the DFHR dataset, the NPE kernel outperforms the state of the arts whereas the performance of  the NPO kernel is on par with them.  For datasets except for ENZYMES and COX2, NP kernel augmented with NPO kernel performs significantly better than NPE kernel. This validates our argument about the need for processing the label and attribute information independently. This also gives evidence to the fact that in the case of graph data analysis, if attributes are present it cannot be neglected. This is important since most of the kernels developed in this regard could only process labels. In comparison with the performance of the proposed kernels with the discretization algorithms (Propagation and HGK kernels), the proposed designs perform better although the runtime of discretization based approaches are low.

The runtime of NP and NPS kernels are reported for the global computation approach with Gaussian kernel being the base kernel and it is reasonable compared with state-of-the-arts. Considering the runtime, propagation kernels are the fastest. But they use discretized attribute information and hence their performance is lower than state-of-the arts. Although the runtime of RetGK kernels is better, NP kernel processes the label and attribute information independently which makes the performance of those better. Since NP and NPS kernel establishes a well-defined correspondence between subgraphs, its performance is better than GraphHopper kernel despite the difference in runtime. In comparison with the shortest path kernel and Graph invariant kernels, the runtime for NP kernels is better. The performance of the GIK kernel are close with that of the NP kernel while the performance of NP kernels compared to the shortest path kernel is better in the datasets, an exception being ENZYMES.

Note that the runtime of the NPS kernel can be improvised with the efficient computation approaches like the strategies introduced in \cite{feragen2013scalable}. The difference in the runtime of NPE and NP kernels are negligible. The reason is because of the global computation scheme. When we calculate the NPE kernel, the steps involved by default provides a way to calculate the NPO kernel as well, as described in line: 29 of Algorithm \ref{alg2}. But once we are only concerned with the computation of NPO kernel, the runtime is significantly reduced since the processing of attributes is not required.

\section{Conclusion}\label{sec:con}

We designed kernels based on the neighborhood preserving property where the similarity between argument graphs is well defined and visualized. The neighborhood preserving concept helps to define a well-defined correspondence between subgraphs computed during kernel computations. The kernels can be recursively computed from the product graph that helps in an efficient computation procedure for large alphabets occurring in the WL color refinement algorithm. The  proposed kernel which independently processes the attribute and label information is found to be very effective in the graph classification tasks. Their performance is found to be superior in comparison with the state-of-the-arts.

\bibliographystyle{ieeetr}

\end{document}